\theoremstyle{definition}
\newtheorem{definition}{Definition}
\newtheorem{proposition}{Proposition}
\def\parasmall#1{\smallskip\noindent\textbf{#1\,}}
\def\para#1{\medskip\noindent\textbf{#1\,}}
\begin{document}

\title{Improving Equivariance in State-of-the-Art Supervised \\Depth and Normal Predictors}

\author{
Yuanyi Zhong, Anand Bhattad, Yu-Xiong Wang, David Forsyth
\vspace{5pt} \\
University of Illinois Urbana-Champaign
\\
{\tt\small \{yuanyiz2, bhattad2, yxw, daf\}@illinois.edu}
}

\maketitle

\begin{abstract}
   Dense depth and surface normal predictors should possess the equivariant property to cropping-and-resizing -- cropping the input image should result in cropping the same output image. However, we find that state-of-the-art depth and normal predictors, despite having strong performances, surprisingly do not respect equivariance. The problem exists even when crop-and-resize data augmentation is employed during training. To remedy this, we propose an equivariant regularization technique, consisting of an averaging procedure and a self-consistency loss, to explicitly promote cropping-and-resizing equivariance in depth and normal networks. Our approach can be applied to both CNN and Transformer architectures, does not incur extra cost during testing, and notably improves the supervised and semi-supervised learning performance of dense predictors on Taskonomy tasks. Finally, finetuning with our loss on unlabeled images improves not only equivariance but also accuracy of state-of-the-art depth and normal predictors when evaluated on NYU-v2. \href{https://github.com/mikuhatsune/equivariance}{(GitHub link)}
\end{abstract}

\def\gT{{\mathcal{T}}}
\newcommand{\E}{\mathbb{E}}

\section{Introduction}

\begin{figure}[t]
    \centering
    \includegraphics[width=\linewidth]{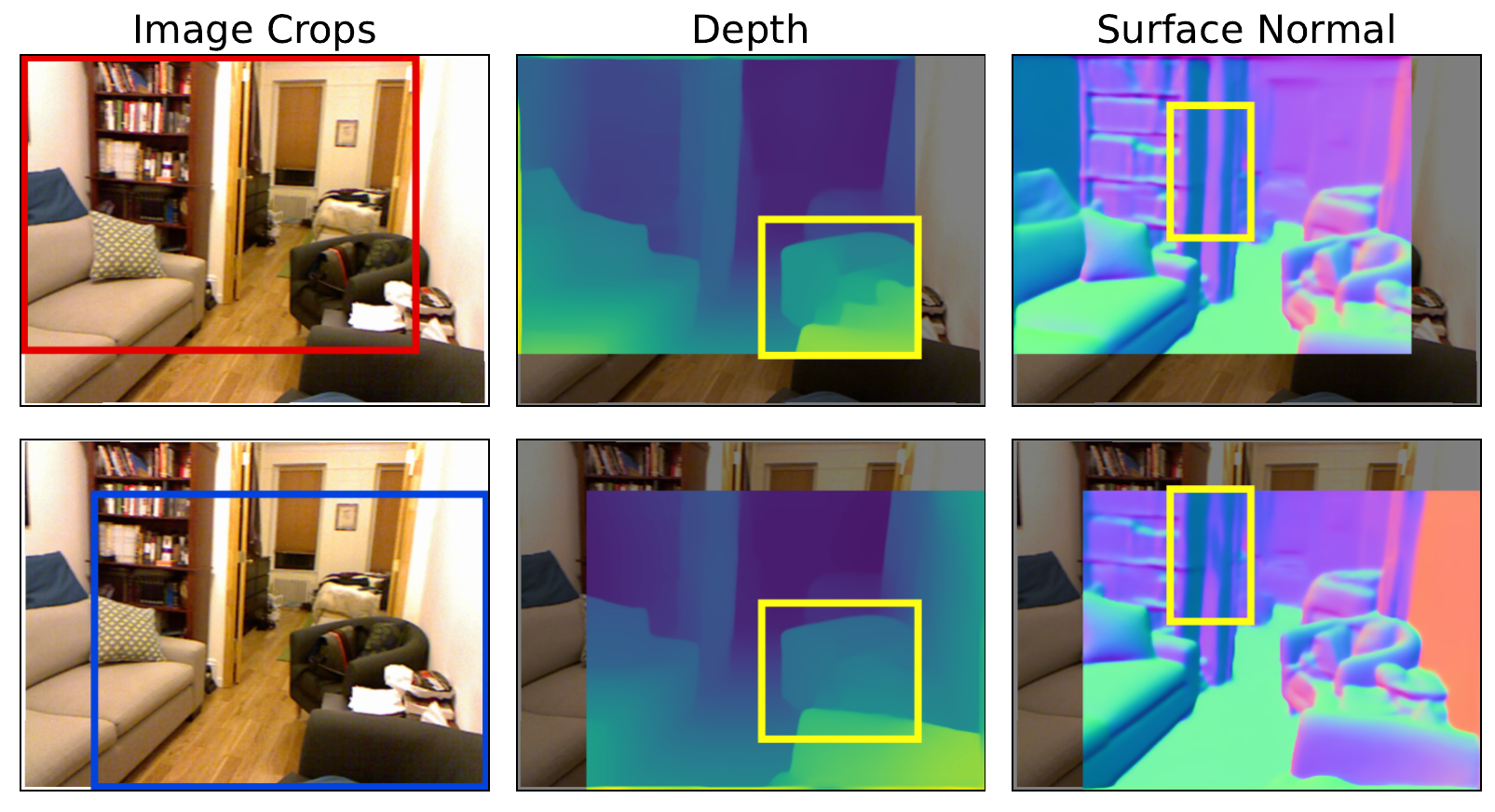}
    \caption{State-of-the-art depth and surface normal predictors fail to capture equivariance, while we know equivariance needs to hold for an ideal depth/normal predictor (when adjusted for prediction scale and offset). We crop and resize two patches (red \& blue) from the same scene, then extract depths/normals with pre-trained models from \cite{ranftl2021vision} (MiDaS-v3) and \cite{bae2021estimating}. We notice clear discrepancies between the predictions of the two crops, as highlighted by the yellow boxes. The same issue exists in other depth predictors (Figure~\ref{fig:depth_issue}) and dense prediction tasks as well (supplementary).}
    \label{fig:sota_bad}
\end{figure}

Depth regression \cite{ranftl2020towards,ranftl2021vision,lee2019big,yin2019enforcing,huynh2020guiding,yang2021transformer,miangoleh2021boosting,facil2019cam,yin2021learning,bhat2022localbins,patil2022p3depth} and surface normal regression \cite{bae2021estimating,huang2019framenet,wang2020vplnet,do2020surface} are image-to-image dense prediction tasks that involve predicting an output image of the same size as the input image. This contrasts with image classification, where only one or a few category labels are predicted per image. 
A shared feature among depth and normal prediction tasks is that they naturally require equivariance, such that a geometric transform (e.g., random cropping) applied to the input image results in the same transform to the output image \cite{lenc2015understanding,cohen2016group,finzi2020generalizing,kondor2018generalization}, when the effect (scale of the depth prediction) of camera intrinsic change due to cropping is accounted for.
This is because the relative depths and normals are derived from the underlying geometrical and physical properties of the scene that are not affected by viewport changes. Consequently, a good depth or normal predictor must have the equivariant property.

To our surprise, we find state-of-the-art well-engineered depth and normal predictors often fail at equivariance. We investigate two recent models: the MiDaS CNN-based (v2.1) and Transformer-based (v3.0) depth predictors from \cite{ranftl2020towards,ranftl2021vision} and the uncertainty-guided CNN-based surface normal predictor from \cite{bae2021estimating}. We generate a pair of resized crops of the same test image from NYU-v2 \cite{silberman2012indoor}, extract predictions with the networks, and measure equivariance by comparing and computing the mean errors between the predictions of the two crops. A more equivariant network would produce smaller errors from this procedure. We discover that the examined depth and surface normal predictors do not handle equivariance to cropping very well, as shown in Figure~\ref{fig:sota_bad}. There are prominent, sometimes structural, inconsistencies in the predictions of the two crops. For this particular scene, the mean error induced by cropping is significant -- as large as 12.6\% absolute relative error (AbsRel) between crops for depth prediction, making it comparable to the overall AbsRel error to ground truths (13.7\%).
Given the widespread use of such dense predictors, for example, MiDaS-v3 for the depth-guided inference in Stable Diffusion-v2 \cite{rombach2022high}, it is imperative to solve such an issue.

Data augmentation is a widely-used strategy to promote the equivariance of models during training. In each mini-batch, instead of seeing the original images, the network sees random resized crops of them. The network is implicitly trained to cope with the variations caused by random crops in a straightforward data-driven manner. However, the problem persists even when randomly resized cropping augmentation is used during training. In fact, the state-of-the-art models we tested, for example, the MiDaS depth networks \cite{ranftl2020towards,ranftl2021vision}, are already trained on random crops. This suggests that augmentation alone is not a sufficient solution to the equivariance issue. 
Other methods to enforce equivariance include invariant inputs and equivariant architectures, but they involve a nontrivial additional effort to construct and do not apply to the cropping transform we are concerned about. Therefore, we compare our approach to the data augmentation strategy as our primary baseline.

In this paper, we propose an equivariant regularization approach built on top of data augmentation to improve equivariance in dense depth and normal prediction networks. Our approach consists of two parts: an equivariant averaging step of the outputs of random crops, and an equivariant loss between the crop outputs and the average output. The averaging step is based on the key observation that the full output average of all possible transforms of a transformation group guarantees equivariance to that group. Our sampling version is effectively an unbiased estimate of the full average. The equivariant loss enforces self-consistency and promotes equivariance explicitly rather than implicitly as in data augmentation. Thanks to the flexible formulation, our approach can be applied to any layer of popular network architectures (e.g., CNN or Vision Transformer \cite{dosovitskiy2020image}), and with unlabeled images -- both are beyond what data augmentation can do. Meanwhile, our approach retains the benefit of data augmentation, as it imposes no extra cost during testing because the network architecture and the inference procedure are not changed in any way.

Empirically, we demonstrate the effectiveness of our equivariant regularization approach in supervised, semi-supervised and unsupervised learning settings. In the supervised setting, we benchmark our approach against the no-augmentation and augmentation baselines on edge detection, depth prediction, and surface normal prediction tasks of the Taskonomy dataset \cite{zamir2018taskonomy}. We find that our approach overcomes the ineffectiveness of using data augmentation alone. In the semi-supervised setting, we show our approach benefits from unlabeled data, improving the sample efficiency further. Finally, in the unsupervised setting, we demonstrate the capability to adapt the state-of-the-art depth and surface normal models to the NYU-v2 dataset \cite{silberman2012indoor} (which these models are not trained on), and improve their accuracy and equivariance, without using any ground truth labels.

To summarize, our contributions are the following:
\begin{itemize}[itemsep=2pt,parsep=0pt,topsep=5pt]
    \item We point out an obvious but overlooked issue: The state-of-the-art depth and normal prediction networks fail at equivariance to cropping.
    \item We propose an equivariant regularization approach to learn more equivariant networks effectively.
    \item We show empirical successes of our approach in a range of settings, and improve the equivariance and accuracy of the state-of-the-art depth and normal models.
\end{itemize}

\section{Related Work}

\parasmall{Equivariance in ML.}
Equivariance is tied closely to geometry and symmetry. The entire subject of physics is founded on concepts surrounding symmetry. A wide range of natural phenomena admits equivariance inherently since the underlying mechanism is oftentimes geometrical. As a consequence, a lot of data that machine learning deals with has the equivariance property. For example, camera photography follows simple 3D geometry rules, thus a shift in camera position leads to a shift in the photograph; the molecules and point clouds have translation and rotation symmetry in 3D, thus an SE(3) transform should not change any property. Therefore, it is natural to consider equivariance in developing machine learning models.

\para{Equivariance in 2D computer vision.}
Convolutional neural networks (CNN) for 2D images are shown to have the approximate translation equivariance property due to the nature of convolution \cite{lenc2015understanding}. Classic antialiasing applied to CNNs improves the shift-equivariance (invariance) by overcoming the signal alias introduced by downsampling layers \cite{zhang2019making}. There is a line of work developing rotation equivariant 2D CNNs \cite{cohen2016group,worrall2017harmonic,marcos2017rotation,weiler2018learning,weiler2019general}. The transformation group for 2D rotation is the Special Orthogonal group SO(2), and the Special Euclidean group SE(2) if the translation is allowed. The derivation of group equivariance constraint typically results in steerable filters constructed from 2D harmonic bases. The convolution filter weights are parameterized as a linear combination of the harmonic bases.

Equivariance can also be achieved by parameter sharing of the neural net weights \cite{ravanbakhsh2017equivariance}. However, this approach is only possible for limited kinds of groups, such as 90-degree rotations.
2D scale equivariant CNN has been studied \cite{marcos2018scale,worrall2019deep,sosnovik2019scale}. This is typically done by applying the same convolution kernel on several scales or constructing steerable filters from the bases. Scale equivariant network design has been applied to 3D object detection to achieve depth equivariance \cite{kumar2022deviant}.
Equivariant network design method can be generalized to other groups
\cite{kondor2018generalization,finzi2020generalizing,romero2020attentive,yarotsky2022universal,murphy2019janossy}
and has rich theory in math and physics \cite{cohen2019general,he2021gauge,cohen2019gauge}. Equivariance can also be achieved by transforming the data to canonical coordinate systems \cite{tai2019equivariant,puny2021frame,gandikota2022simple}.
In particular, \cite{puny2021frame} transforms the data to key canonical frames of the group and averages over those frames, while we average over a random sample of the cropping transform.
Transformers are the current state-of-the-art neural net architecture \cite{dosovitskiy2020image}. People have sought to combine Transformer and equivariance, resulting in 
Lie-Transformer \cite{hutchinson2021lietransformer}.

In terms of applications, there is good evidence that equivariance benefits image semantic segmentation \cite{cho2021picie,subhani2020learning,melas2021pixmatch}, object detection to shifting \cite{manfredi2020shift} and rotation of images \cite{han2021redet}. Equivariance is also useful for generative modeling, for example, for normalizing flow-based generative models \cite{kohler2020equivariant,satorras2021n}, and variational autoencoders \cite{keller2021topographic}. Equivariance to rotation is beneficial in digital pathology \cite{veeling2018rotation}. Extension to time-equivariance for video is also possible \cite{jenni2021time}.

\para{Equivariance in self-supervised learning.}
Equivariance and invariance are useful in self-supervised learning. The popular contrastive learning algorithm relies on the invariance of representations between augmented views of the same image \cite{chen2020simple,he2020momentum,caron2021emerging,grill2020bootstrap,chen2021exploring}. More recently, people are exploring ways to use equivariance in contrastive learning \cite{xie2022should}. Leveraging equivariance to cropping transform results in dense contrastive learning at pixel-level: for example, PixelPro \cite{xie2021propagate} and DenseCL \cite{wang2021dense} for pre-training, PC2Seg \cite{zhong2021pixel} for semi-supervised semantic segmentation; and at region-level: RegionCL \cite{xu2022regioncl}, DetCon \cite{henaff2021efficient}. Equivariance to 4-way rotation can be jointly used with the image-level contrastive objective to improve performance \cite{dangovski2021equivariant}. Self-supervised learning from equivariance between flow transformations of the input image is also effective \cite{xiong2021self} and between matching points for landmark representation learning \cite{thewlis2019unsupervised}.

These works are especially successful for downstream segmentation and detection tasks. However, the advancements in these work have yet to be thoroughly explored in the state-of-the-art depth or normal predictors to the best of our knowledge \cite{ranftl2020towards,ranftl2021vision,bae2021estimating}, where the dominant paradigm is still supervised training. Inspired by prior work in SSL and segmentation, our work brings in the powerful idea of equivariance to improve state-of-the-art supervised depth and normal predictors.

\section{Background}

We give some background on the issue of equivariance and how people typically approach equivariance in the literature.

\begin{definition}[Equivariance]\label{def:equi}
Formally, a function $f: X \rightarrow Y$ is equivariant under the action of a group $G$ on $X$ and a group of $G'$ on $Y$ if for any $t \in G$ there exists $t' \in G'$ such that $f \circ t(x) = t' \circ f(x)$. More commonly, it is true that $G = G'$, i.e., the transformation on both $X$ and $Y$ domains is the same, and the condition becomes
\begin{equation}
f \circ t(x) = t \circ f(x) .
\end{equation}
It essentially states that transform $t$ commutes with $f$ and changes the input and output in the same way.

Invariance can be regarded as a special case of equivariance where $g'$ is always the identity operation. In other words, invariance means $f \circ t(x) = f(x)$ for any action $t \in G$. For example, equivariance is useful for modeling transform-aware phenomena, while invariance is useful for modeling classification tasks.
\end{definition}

\para{Non-equivariance issue in depth and normal predictors.}
Convolutional neural networks possess a certain degree of translation equivariance, but for a broader class of transformations, such as resized cropping, rotation, and scaling, they are not designed to capture equivariance. More recent networks such as Transformers \cite{dosovitskiy2020image} have little inductive biases built-in, they likely do not possess much equivariance on their own as well, and need to see a large number of training examples to learn equivariance in a purely data-driven manner.

\begin{figure}[t]
\scriptsize
\centering
\hspace{-0.85mm}\includegraphics[width=1.01\linewidth]{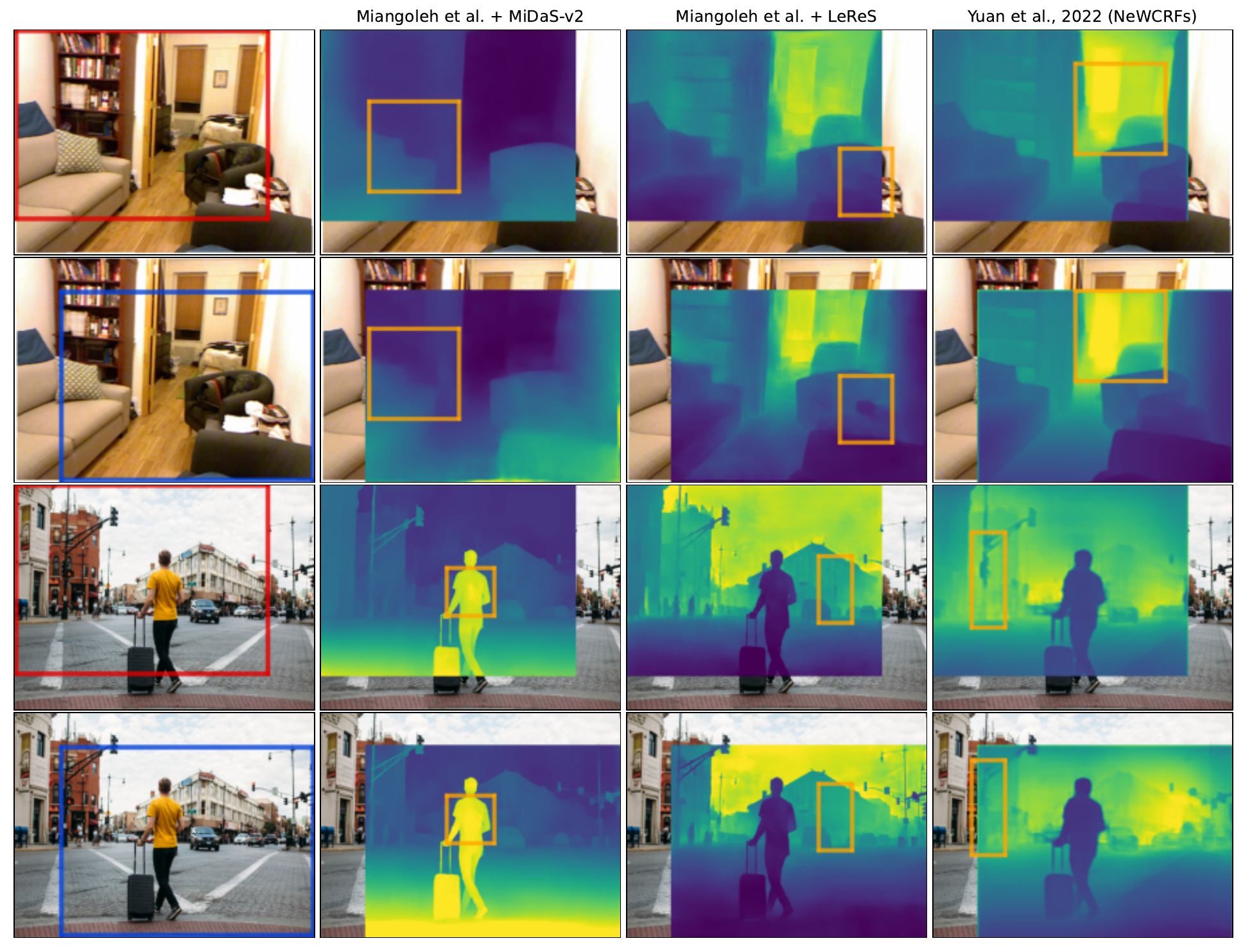}
~\vspace{-3.5mm}
\caption{Equivariance failures in more depth predictors \cite{miangoleh2021boosting,yuan2022neural} than Figure~\ref{fig:sota_bad}, suggesting the issue is \emph{prevalent}. The prediction values of the two crops are aligned with the least square. The 2nd column is disparity, others are depth. Top-down, left-right: Notice the blurry/sharp edge, missing object on the stand, wall, pattern on the person's back, vertical line on the building, and inconsistent traffic light.}
\label{fig:depth_issue}
\vspace{-5mm}
\end{figure}

Figure~\ref{fig:sota_bad} shows the failure of equivariance of depth \cite{ranftl2021vision} and surface normal predictors \cite{bae2021estimating}. The issue is not unique to these two methods. We examined two more recent approaches \cite{miangoleh2021boosting,yuan2022neural} in Figure~\ref{fig:depth_issue}. \cite{miangoleh2021boosting} is especially interesting, because it similarly merges small crops to reduce the error of pre-trained depth predictors at inference time.

While \cite{miangoleh2021boosting} and we both use the average idea, we conduct averaging at training time instead of inference time. The fact there are still structural changes with cropping suggests that inference-time 
averaging does not completely solve the issue. On the other hand, a network trained with our approach improves equivariance without extra inference costs. Additionally, we focus on reducing inconsistent predictions between (often large) crops, while \cite{miangoleh2021boosting} focuses on improving depth details with lots of \emph{small} crops, not necessarily improving equivariance.

Another point to note is that camera intrinsic change (of center and scale) caused by random cropping cannot explain the discrepancies in Figures~\ref{fig:sota_bad}, \ref{fig:depth_issue}.
Camera intrinsic change may lead to overall shifting or rescaling of predicted depths, as noticed and fixed by \cite{facil2019cam}. However, the failures we observe are structural and related to the content, like missing or creating non-existent objects, even with large crops that only mildly affect intrinsics. What we observe is a separate non-equivariance issue that needs to be solved.

\subsection{Existing Approaches}

We recognize three types of methods to introduce equivariance into machine learning models. They have different advantages, disadvantages, and suitable application domains. Unfortunately, data augmentation has been the only approach that works for the random resized cropping transform in the dense prediction tasks we study here, which is still inadequate.

\smallskip\para{Data Augmentation.}
Data augmentation is the simplest way to encourage the equivariance of ML models \cite{chen2020simple,he2020momentum,yang2019invariance}. As long as the transformation function is available, we can artificially create more training examples by transforming the original data randomly. In the case of invariance, we only augment the input data, e.g., the input images for image classification, where the output of the machine learning model is trained to be invariant to the transformation. In the case of equivariance, we can augment the input and the output simultaneously, e.g., the RGB images and depth maps. Commonly used data augmentations include random color jittering, random resizing, and random cropping. The benefit of this approach is simplicity. One can keep the training pipeline and the modeling part the same. However, the downside is potential inefficiency, as we also see with state-of-the-art depth and normal networks in Figure~\ref{fig:sota_bad}. The model may need to see a very large quantity of augmented data examples to learn the equivariance property in a data-driven manner.

\smallskip\para{Invariant Inputs.}
The second type of approach converts the data into a format that is invariant or equivariant to the specific transformation. An example of this approach is the distance matrix when dealing with molecular data \cite{satorras2021n,fuchs2020se}. People turn the Cartesian coordinates of points (atoms) into a relative distance matrix between pairs of points. It is easy to verify that the distance matrix is invariant to 3D translation and rotation. If the model only depends on the invariant inputs, it is guaranteed to be equivariant or invariant to any input transformation. Another example is the alignment procedure in 3D point cloud/data processing \cite{thomas2018tensor,chen2021equivariant,li2019discrete,li2021closer,sajnani2022condor}, where one can align the points according to their principle canonical axes either globally or locally. This approach works well when the invariant inputs exist, contain sufficient information for the task, and are easy to compute. However, the usage is limited when these requirements are not met. For example, it is not immediately clear how to come up with invariant inputs for standard image augmentations including the crop-and-resize in dense prediction tasks.

\smallskip\para{Equivariant Architecture.}
A rich line of research focuses on building equivariance property into the ML model in a ``hard-wired'' manner \cite{cohen2016group,worrall2017harmonic,marcos2017rotation,weiler2018learning,weiler2019general,kondor2018generalization,finzi2020generalizing,romero2020attentive,cohen2019general,he2021gauge,cohen2019gauge}. 
They typically start from a group theory and symmetry standpoint and derive functional forms that satisfy equivariance (relatively) precisely with math and physical science flavor. For example, 2D convolution can be derived for the planer translation group with a Fourier basis. Mirroring constraints on convolutional kernels can be derived for the left-right mirroring group. Convolutions with spherical harmonics can be derived for SO(3) groups. The advantage of this type of approach is that it is principled, exact, and sample-efficient. As rewriting the functional form with equivariance in mind restricts the size of the function class and introduces strict inductive biases, searching for the right hypothesis from data may become easier, and the learning may be accelerated. However, the disadvantages are that one has to modify the model architecture, and deriving the analytical solution for the equivariance basis might be complicated or even impossible, such as for the randomly resized transform in our dense prediction case.

\section{Our Approach}

Our approach is equivariant regularization. Equivariant property can be imposed by a regularization loss in a ``soft'' manner together with data augmentation.

We will first describe the mathematical intuition of our approach. We start with the definition of equivariance, then introduce the equivariant average operator as a core technique. The average operator has nice properties, such as being able to turn a non-equivariant function into an equivariant one. We leverage such properties to build our equivariant regularization technique. We introduce a differentiable equivariant loss between the average and individual predictions, which can be minimized to encourage equivariance.

Now we consider the following average operator.

\begin{definition}[Equivariant average operator]\label{def:ave}
Let $P(t)$ be a uniform distribution over group elements $t \in \gT$. We define the equivariant average of an arbitrary function $f$ as
\begin{equation}
    \bar{f}(x) = \E_{t\sim P(t)} \left[ t^{-1} \circ f \circ t(x) \right] .
\end{equation}
\end{definition}

The intuition behind this definition is variance reduction. Each summand in the expectation is an estimator of the predicted quantity, with some variance. Taking crop transform as an example, each $t$ takes a particular cropped view of the input image, $f$ makes the predictions for this view, and $t^{-1}$ transforms the predicted image back to the original coordinate frame. Now, each $t$ might lead to a different type of error in the prediction, but averaging (or summing) over all of them will make the differences disappear.
This intuition is formally described in the following properties.

\begin{proposition}\label{prop:ave}
The averaged $\bar{f}$ is equivariant to $\gT$.
\end{proposition}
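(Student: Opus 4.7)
The plan is to prove equivariance by a direct change-of-variables on the group, exploiting the translation-invariance of the uniform distribution $P(t)$ (i.e.\ treating $P$ as a Haar measure on $\gT$). Concretely, I will fix an arbitrary $s \in \gT$, expand $\bar{f}(s(x))$ using Definition~\ref{def:ave}, reparametrize the dummy group element, and recognize the result as $s \circ \bar{f}(x)$.

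First I would write
\[
\bar{f}(s(x)) \;=\; \E_{t \sim P(t)}\!\left[ t^{-1} \circ f \circ t \circ s(x) \right].
\]
Then I would substitute $u = t \circ s$, so that $t = u \circ s^{-1}$ and $t^{-1} = s \circ u^{-1}$. Because $P$ is uniform on the group, the distribution of $u = t \circ s$ is the same as that of $t$ (right-translation invariance of Haar measure). Hence
\[
\bar{f}(s(x)) \;=\; \E_{u \sim P(u)}\!\left[ s \circ u^{-1} \circ f \circ u(x) \right] \;=\; s \circ \E_{u \sim P(u)}\!\left[ u^{-1} \circ f \circ u(x) \right] \;=\; s \circ \bar{f}(x),
\]
where pulling $s$ outside the expectation uses linearity (for the relevant space of outputs) and the fact that $s$ does not depend on $u$. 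Since $s \in \gT$ was arbitrary, this establishes $\bar{f} \circ s = s \circ \bar{f}$ for every $s \in \gT$, which is precisely the equivariance condition from Definition~\ref{def:equi}.

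The main obstacle, and the place where I would be careful, is justifying the reparametrization $u = t \circ s$. This step silently requires (i) that $\gT$ is closed under composition and inversion, and (ii) that the ``uniform'' measure $P$ is invariant under right translation by $s$. For finite or compact groups the uniform/Haar measure supplies both properties for free, but the crop-and-resize transforms used in the paper do not literally form a group (a crop of a crop may fall outside the admissible family, and boundary effects distort the measure). I would flag this as an idealization and handle the applied case by noting that the Monte Carlo estimator used in practice is an unbiased estimate of this idealized average, so the proposition remains the correct theoretical justification for the regularizer even though exact equivariance is only approximate in implementation.
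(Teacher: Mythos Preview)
Your argument is correct and is essentially the paper's own proof: both expand $\bar f$ applied to a transformed input, perform the same right-translation change of variables on the dummy group element (your $u = t\circ s$ is the paper's $t_2 = t_1\circ t$), and then pull the fixed transform out by linearity. Your extra remarks making the Haar/right-invariance assumption explicit and flagging that crop-and-resize is only approximately a group are sound and go slightly beyond what the paper states, but the core derivation matches.
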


\begin{proof}
For any $t \in \gT$, it is straightforward to verify that
\begin{align}
\bar{f} \circ t(x)
&=
\E_{t_1\sim P(t)} \left[ t_1^{-1} \circ f \circ t_1 \circ t (x) \right]  \tag{definition of $\bar{f}$}
\\&=
\E_{t_2\sim P(t)} \left[ (t_2 \circ t^{-1})^{-1} \circ f \circ t_2 (x) \right]  \tag{let $t_2 = t_1 \circ t$, associativity}
\\&=
\E_{t_2\sim P(t)} \left[ t \circ t_2^{-1} \circ f \circ t_2 (x) \right]
\\&=
t \circ \E_{t_2\sim P(t)} \left[ t_2^{-1} \circ f \circ t_2 (x) \right]  \tag{linearity of expectation}
\\&=
t \circ \bar{f} (x)  \tag{definition of $\bar{f}$}
\end{align}
which is the definition of equivariance.
\end{proof}

\begin{proposition}\label{prop:idem}
The equivariant average operator preserves the function $f$ if $f$ is already equivariant. As a corollary, the operator is idempotent, namely, $\bar{\bar{f}} = \bar{f}$.
\end{proposition}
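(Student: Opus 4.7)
The plan is to prove the two claims in sequence: first, that equivariance of $f$ implies $\bar{f} = f$, and then derive the idempotence as a direct corollary using Proposition~\ref{prop:ave}.

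For the first claim, I would start from the definition $\bar{f}(x) = \mathbb{E}_{t\sim P(t)}[t^{-1} \circ f \circ t(x)]$ and exploit the assumed equivariance of $f$ inside the expectation. Since $f$ equivariant means $f \circ t(x) = t \circ f(x)$ for every $t \in \mathcal{T}$, substituting this into the integrand gives $t^{-1} \circ f \circ t(x) = t^{-1} \circ t \circ f(x) = f(x)$. The integrand is then a constant (independent of $t$), so the expectation collapses and yields $\bar{f}(x) = f(x)$. This step is completely routine and relies only on composing the equivariance identity with $t^{-1}$.

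For the corollary, I would invoke Proposition~\ref{prop:ave}, which states that $\bar{f}$ is itself equivariant to $\mathcal{T}$ regardless of whether $f$ was. Applying the first claim to $\bar{f}$ in the role of the ``already equivariant'' function then immediately gives $\overline{\bar{f}} = \bar{f}$, which is the idempotence statement. No separate computation is needed.

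I do not anticipate any real obstacle here: the argument is essentially a one-line cancellation $t^{-1}\circ t = \mathrm{id}$ inside the expectation, and the idempotence is a clean application of the first part combined with Proposition~\ref{prop:ave}. The only thing to be careful about is to phrase the ``constant integrand'' step cleanly, which I would write as a single display equation $\bar{f}(x) = \mathbb{E}_{t\sim P(t)}[f(x)] = f(x)$ rather than as a chain of aligned equalities, to avoid any risk of an empty line breaking the math environment.
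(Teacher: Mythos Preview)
Your proposal is correct and matches the paper's own proof essentially line for line: substitute the equivariance identity $f\circ t = t\circ f$ inside the expectation, cancel $t^{-1}\circ t$, collapse the constant integrand, and then deduce idempotence by applying the first part to $\bar f$ via Proposition~\ref{prop:ave}. There is nothing to add or correct.
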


\begin{proof}
Use the equivariance definition of $f$ and the associativity of function composition,
\begin{equation}
\begin{aligned}
\bar{f} (x)
&=
\E_{t\sim P(t)} \left[ t^{-1} \circ f \circ t (x) \right]
=
\E_{t\sim P(t)} \left[ t^{-1} \circ t \circ f (x) \right] \\
&=
\E_{t\sim P(t)} \left[ f (x) \right]
=
f (x) .
\end{aligned}
\end{equation}
From Proposition~\ref{prop:ave}, we know that $\bar{f}$ is always an equivariant function, therefore $\bar{\bar{f}} = \bar{f}$.
\end{proof}

Proposition~\ref{prop:ave} and \ref{prop:idem} are practically useful. They together justify treating the equivariant average as a normalization operation because (1) it can turn an arbitrary non-equivariant function into an equivariant one, (2) applying it twice has no further effect than applying it only once.

Once we have the equivariant average, we can use it as a training target to achieve higher equivariance. Specifically, we construct the following loss function based on the equivariant average operator to encourage equivariant property on a trainable function $f$. This $f$ can be the output of a dense prediction network or any intermediate features.

\begin{definition}[Equivariant loss]\label{def:equi_loss}
We define the Equivariant loss as the mean L2 error between the individual prediction $f \circ t(x)$ and the averaged prediction $\bar{f}(x)$:
\begin{equation}
\xi(f) = \frac1{Z(\bar{f})} \E_{t\sim P(t)} \left[ \| f \circ t(x) - t \circ \bar{f}(x) \|_2^2 \right]
\end{equation}
where $Z$ is the normalizing constant:
$
Z(\bar{f}) = \| \bar{f}(x) \|_2^2
$
assuming $\bar{f}$ is not everywhere $0$.
\end{definition}

The normalizing constant $Z$ is a technical trick to normalize the scale of the equivariant loss. Without $Z$, simply multiplying $f$ with a scalar will enlarge the equivariant loss, which is undesired. With $Z(\bar{f})$, since $Z(\alpha \bar{f}) = \alpha^2 Z(\bar{f})$, 
we can show that
\begin{equation}
\xi(\alpha f) = \frac1{\alpha^2 Z(\bar{f})} \E_{P(t)} \left[ \alpha^2 \| f \circ t(x) - t \circ \bar{f}(x) \|_2^2 \right] = \xi(f) .
\end{equation}

\begin{figure}[t]
    \centering
    \includegraphics[width=0.91\linewidth]{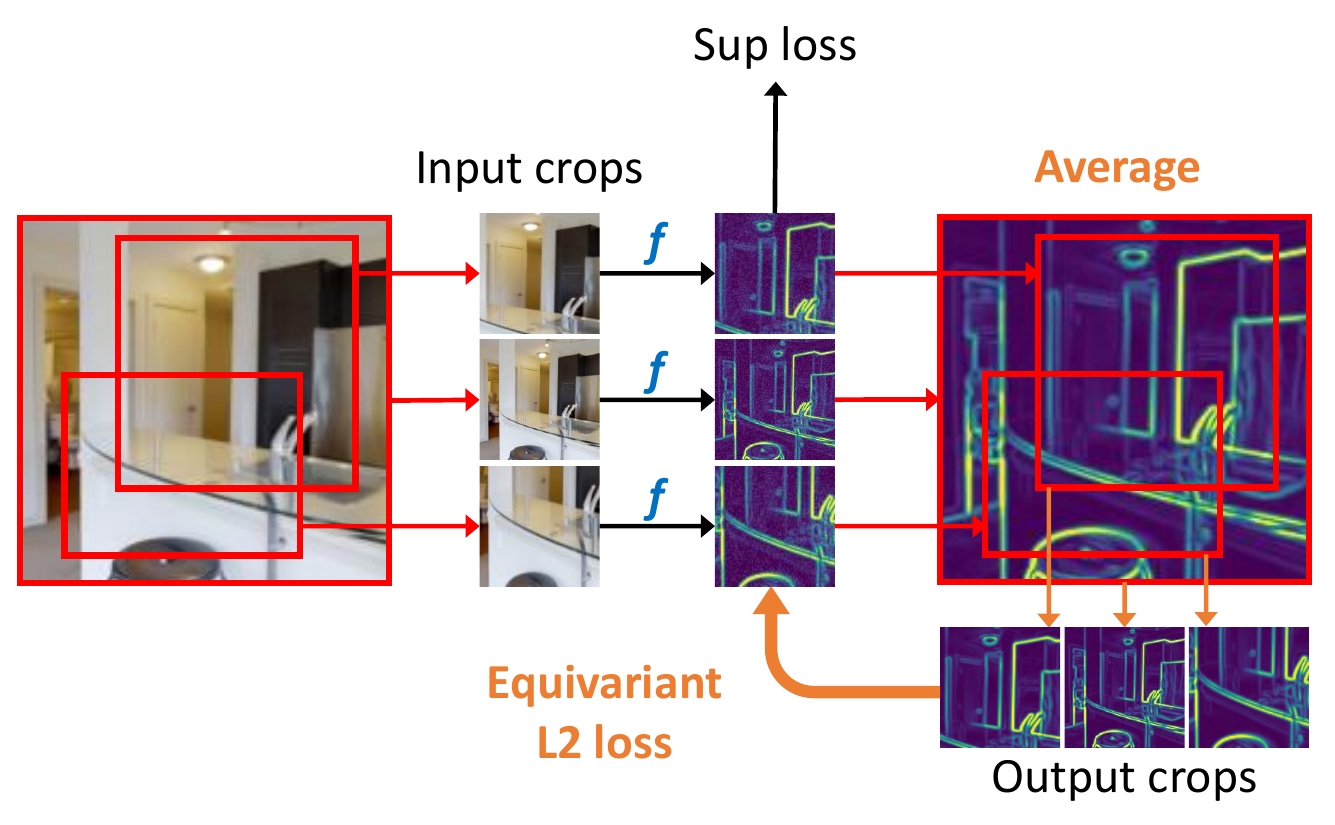}
    \caption{Illustration of our equivariant regularization approach with 3 crops for the 2D texture edge detection task. We generate 3 crops of the input image and pass them through the network $f$ to get 3 outputs. We perform the equivariant average to register them together and get the averaged output. Next, we crop the averaged output to obtain 3 output crops. They correspond to the same image regions as the input crops. We use them as training targets (gradient-stopped) for the individual crop's outputs. A standard supervised loss would use the ground truth as a target, whereas our approach uses the averaged output as a target.}
    \label{fig:equi}
\end{figure}

In practice, it is often computationally infeasible to enumerate and average over all possible transforms $t$'s, as there are too many. This is the case for the commonly-used random resized cropping augmentation we care about. The random cropping induces a combination of {\em continuous} rigid transformation and scaling groups. We can circumvent this issue by Monte Carlo estimation, i.e., sample a couple of $t$'s and compute the empirical average of $\bar{f}$ and loss. The sample size $K$ is a hyper-parameter to be studied empirically that trades off accuracy and computation efficiency. The following equations state the sampling version:
\begin{align}
\label{eq:equi_loss}
\widehat{\bar{f}(x)} &= \frac1K \sum_{k} \left[ t_k^{-1} \circ f \circ t_k(x) \right] ,\\
\widehat{\xi(f)} &= \frac1{Z(\bar{f})} \frac1K \sum_{k} \left[ \| f \circ t_k(x) - t_k \circ \widehat{\bar{f}(x)} \|_2^2 \right] .
\end{align}

We can attach the equivariant loss onto any layer of a neural net, and train the network with the linear combination of the task loss and the equivariant loss. 
Formally, assume the task loss is $\ell$ and the equivariant loss is imposed on the $l$-th layer $f_l$ with loss coefficient $\lambda_l$, the total loss writes as
\begin{equation}
    \ell_{\text{total}}(f) = \ell(f(x), y) + \lambda_l \widehat{ \xi(f_l) } .
\end{equation}

Figure~\ref{fig:equi} illustrates our equivariant regularization approach regarding the random resized crop transform.

\subsection{Discussion}

The difference between our approach and the equivariant architecture is that we do not emphasize exact equivariance in this case. Once trained, the model is allowed to have a certain degree of non-equivariance than a strict equivariant model but is expected to possess a higher degree of equivariance than a baseline model without any special equivariance treatment.

Our approach builds on top of data augmentation. It strikes a balance between the equivariant architecture and the data augmentation approaches. It improves upon pure data augmentation by introducing explicit learning signals for equivariance and does not require the complicated derivation or architecture modification of a strict equivariant model. In fact, there should be no additional overhead to a regular model at inference time. We also have the flexibility to adjust the regularization strength by tuning loss coefficients, when the task is not perfectly equivariant or to strive for better overall performance. Our approach can also extend naturally with unlabeled data since the procedure does not involve ground truth labels.

\section{Experiment}

\subsection{Datasets, Models, and Tasks}

We evaluate our approach on three data labeling settings with increasing difficulty: supervised, semi-supervised and unsupervised.

\para{Supervised setting.}
For a supervised setting, we use the Taskonomy dataset \cite{zamir2018taskonomy} standard Tiny splits for experimentation. Taskonomy contains RGB-D scans of indoor building scenes. Several dense prediction tasks are derived from the scans. We focus on 2D texture detection, a low-level vision task; and depth z-buffer prediction and surface normal prediction, two related geometric vision tasks. The Tiny split has 24 training buildings (250K images; originally 25 buildings, 1 building is removed due to data corruption) and 5 validation buildings (52K images).

The model involved in the supervised setting is the standard U-Net from XTaskConsistency \cite{zamir2020robust}. This U-Net has 6 downsampling, 6 upsampling blocks, and the skip connections between corresponding downsampling and upsampling stages. The supervised loss is the L1 loss between the outputs and ground truth targets. For depth, we use inverse depth (i.e., disparity) following \cite{ranftl2020towards}. We apply our equivariant regularization loss technique on the second to last convolutional layer of the network. The loss location will be ablated. The loss coefficient is set to 1e-4. For each image, we generate $K=3$ random crops with scale variation uniformly sampled from 0.4-1.0, aspect ratio from 3/4-4/3, allowing at most 20\% padding length, and common color jittering (brightness = contrast = saturation = 0.4, hue = 0.1). In practice, we also employ a weighting window with smooth edges when computing the equivariant average to suppress the boundary effects. We train all models with the AdamW optimizer \cite{loshchilov2017decoupled}, with learning rate cosine annealed from 1e-3 to 0, and weight decay 1e-4, for 78K gradient steps with batch size 32 distributed on 4 GPUs. Input resolution is 256x256. 
To maintain \emph{fair comparison}, the supervised baseline is also trained with $K=3$ crops per image, therefore the wall-clock time of all experiments is roughly the same.

The standard evaluation metrics are L1 error for edge; the percentage of pixels with a relative depth error larger than 1.25 ($\delta\!\!>\!\!1.25$), mean absolute relative error (AbsRel) for depth; and mean angular error for surface normal \cite{zamir2018taskonomy,zamir2020robust}. Since depth predictor is usually not aligned to metric depth, i.e., they output arbitrary scale, we align predicted depth to ground truth with least square regression following MiDaS \cite{ranftl2021vision,ranftl2020towards}. The detail is described in Supp. C.

The essential question we want to study is whether our approach performs better than the usual data augmentation approach in achieving equivariance and accuracy.

\para{Semi-supervised setting.}
For this, we concentrate on the depth prediction task. We use 6 or 12 buildings out of 24 buildings in the training set as the labeled portion, and use the rest of the buildings as the additional unlabeled data. The model, hyper-parameters, and optimization schedule are the same as above. During training, we sample two equal-sized mini-batches (2 $\times$ 32 images $\times$ 3 crops) from the labeled and unlabeled data streams, respectively. We impose the supervised loss only on the labeled batch and our equivariant loss on both batches.

This setting is to test whether our approach provides additional benefits from unlabeled data, which is not possible with the simple data augmentation approach.

\para{Unsupervised setting.}
We focus on unsupervised finetuning of pre-trained state-of-the-art models on the NYU-v2 dataset \cite{silberman2012indoor}. The NYU-v2 dataset contains RGB-D scans of 464 indoor scenes, of which 249 scenes (795 images) are used for training and 215 scenes (654 images) for testing. The resolution is 480x640.

We consider the MiDaS-v2.1 and v3.0 depth predictors from \cite{ranftl2020towards,ranftl2021vision} with CNN and Vision Transformer backbones, respectively. According to their paper, these models are trained on random augmented crops of length 384; therefore, we set the input shape as 384x288 in our unsupervised finetuning experiments. We also consider the pre-trained uncertainty-guided surface normal predictor from \cite{bae2021estimating}. This network is based on the convolutional EfficientNet backbone \cite{tan2019efficientnet}. We set the input shape as 640x480 for surface normal to match their training setting. We use AdamW optimizer for 800 steps, with a small learning rate of 1e-5 for depth and 1e-4 for surface normal, as we find them work the best. Two loss functions are involved in finetuning: the first is the supervised loss between the outputs and the pseudo labels generated from the pre-trained checkpoints, and the second is our equivariant loss on the output of the network. We set the equivariant loss coefficient to 1e-4 as well. We sample $K=3$ random crops per image with scale variation 0.7-1.0, at most 10\% padding and common color jittering.

Note that all the pre-trained checkpoints investigated here are {\em not} trained on NYU-v2. We want to see if our approach can boost the performance of state-of-the-art pre-trained models on this new dataset, by encouraging equivariance alone, {\em without} using any ground truth labels.

\subsection{Results}

\begin{table}[t]
    \small
    \centering
    \caption{Supervised setting: Taskonomy Edge2D, Surface Normal, and Depth-ZBuffer. Equivariant regularization on U-Net improves validation performance. Sup baseline refers to baseline without data augmentation, Aug refers to with augmentation, EqLoss refers to our equivariant loss approach. Ang error is the mean angular error in degrees, $\delta\!\!>\!\!1.25$ is the percentage of pixels with a relative depth error larger than 1.25.}
    \label{tab:res_task}
    \setlength{\tabcolsep}{2.3pt}
    \begin{tabular}{@{}lllll@{}}
    \toprule
    {\bf Task}    & {\bf Edge2D} & {\bf Normal} & {\bf Depth-Z} \\
    Metric & L1 error ({\scriptsize $\!\times\!10^{-3}$})$\downarrow$   & Ang error ($^\circ$)$\downarrow$ & $\delta\!\!>\!\!1.25$ (\%)$\downarrow$  \\
    \midrule
    Sup baseline & 8.14 & 6.72 & 27.8 \\
    + Aug &  7.35  (-9.7\%) & 6.55  (-2.5\%) & 27.0  (-2.9\%)\\
    + EqLoss (ours) & \textbf{6.35  (-22\%)} & \textbf{6.47  (-3.7\%)} & \textbf{25.0  (-10\%)} \\
    \bottomrule
    \end{tabular}
\end{table}

\begin{table}[t]
    \small
    \centering
    \caption{Semi-supervised setting: Taskonomy Depth-ZBuffer. Equivariant regularization with additional unlabeled data improves more. We treat 1/4, 1/2 of the original data as labeled images; the rest as unlabeled images. Sup + EqLoss refers to using only the labeled part and our loss. Semi-sup + EqLoss refers to applying our equivariant loss on both the labeled and unlabeled images. $\delta\!\!>\!\!1.25$ is the percentage of pixels with a relative depth error larger than 1.25, AbsRel is the mean absolute relative error.}
    \label{tab:res_semi}
    \setlength{\tabcolsep}{8pt}
    \begin{tabular}{@{}lllll@{}}
    \toprule
        {\bf Labeled portion} & {\bf 1/4}  & {\bf 1/2} & {\bf All} \\
        \#Buildings  & 6      & 12      & 24 \\
        \#Images     & 58,783 & 123,496 & 248,148 \\
    \midrule
        & \multicolumn{3}{c}{$\delta\!\!>\!\!1.25$ (\%)$\downarrow$} \\
        Sup + Aug                & 43.4     & 30.4     & 27.0  \\
        Sup + EqLoss (ours)      & 42.0     & 29.8   & \bf 25.0  \\
        Semi-sup + EqLoss (ours) & \bf 41.0 & \bf 29.3 & \bf 25.0  \\
    \midrule 
        & \multicolumn{3}{c}{AbsRel (\%)$\downarrow$} \\
        Sup + Aug                & 25.2     & 20.3     & 18.9  \\
        Sup + EqLoss (ours)      & 24.9     & 20.2     & \bf 18.0 \\
        Semi-sup + EqLoss (ours) & \bf 24.8 & \bf 19.6 & \bf 18.0  \\
    \bottomrule
    \end{tabular}
    \vspace{-10pt}
\end{table}

\parasmall{Equivariant regularization improves edge, depth, and normal dense prediction tasks in the supervised setting.}
The results are organized in Table~\ref{tab:res_task}. Comparing the first row to the second, we confirm that data augmentation is better than no data augmentation, which is known widely. This suggests that the implicit encouragement of equivariance from augmentation is helpful. Comparing the second row to the third, we find that our approach brings noticeable gains on top of data augmentation. We achieve as large as 22\%, 3.7\%, and 10\% error reduction for the edge, normal, and depth predictions relative to the supervised baseline without augmentation in their respective metrics. The results indicate that our approach is a more effective way to enforce equivariance during training than data augmentation and that by doing so, the accuracy is also improved.

\begin{table}[t]
    \small
    \centering
    \caption{Unsupervised setting: Adaptation of state-of-the-art pre-trained depth networks to NYU-v2. We finetune the network with images in NYU-v2, pseudo labels and our equivariant loss (EqLoss row), but without ground truth labels. EqLoss reduces validation errors, while also reducing the validation equivariant loss (EqLoss column), suggesting the network becomes more equivariant. The results compare favorably to other recent methods dedicated to NYU-v2.}
    \label{tab:res_nyu_depth}
    \setlength{\tabcolsep}{2pt}
    \begin{tabular}{@{}lccc@{}}
    \toprule
        Model             & $\delta\!\!>\!\!1.25$(\%)$\downarrow$         & AbsRel(\%)$\downarrow$  & EqLoss$\downarrow$ \\
    \midrule
        \multicolumn{4}{c}{Models trained only on NYU-v2} \\
        Big-to-Small \cite{lee2019big}        & 11.0 & 11.5 & - \\
        Yin et al. \cite{yin2019enforcing}    & 10.8 & 12.5 & - \\
        Huynh et al. \cite{huynh2020guiding}  & 10.8 & 11.8 & - \\
        TransDepth \cite{yang2021transformer} & 10.6 & 10.0 & - \\
    \midrule
        \multicolumn{4}{c}{Models trained on mix datasets transfer to NYU-v2} \\
        MiDaS-2.1 CNN \cite{ranftl2020towards}   & 8.71          & 9.68          & 7.10e-3 \\
        MiDaS-2.1 CNN + EqLoss   & \textbf{7.82} & \textbf{8.92} & \textbf{3.77e-3} \\
        MiDaS-3.0 DPT \cite{ranftl2021vision}    & 8.32          & 9.16          & 7.86e-3 \\
        MiDaS-3.0 DPT + EqLoss  & \textbf{7.75} & \textbf{8.91} & \textbf{3.04e-3} \\
    \bottomrule
    \end{tabular}
\vspace{8pt}
    \centering
    \caption{Unsupervised setting: Adaptation of a state-of-the-art pre-trained surface normal network to NYU-v2. Our unsupervised equivariant finetuning strategy (EqLoss row) reduces the validation mean and median angular errors while reducing the validation equivariant loss (EqLoss column). $11.25^{\circ}$ refers to the percentage of pixels with an error larger than $11.25^{\circ}$. All other models here are trained on ScanNet \cite{dai2017scannet} and evaluated on NYU-v2 directly.}
    \label{tab:res_nyu_normal}
    \setlength{\tabcolsep}{5pt}
    \begin{tabular}{@{}lcccc@{}}
    \toprule
        Model              & Mean$^{\circ}\!\!\downarrow$  & Median$^{\circ}\!\!\downarrow$ & $\!\!11.25^{\circ}\!\!\uparrow$ & EqLoss$\downarrow$ \\
    \midrule
        FrameNet \cite{huang2019framenet} & 18.6 & 11.0 & 50.7 & - \\
        VPLNet \cite{wang2020vplnet}      & 18.0 & 9.8  & 54.3 & - \\
        TiltedSN \cite{do2020surface}     & 16.1 & \bf 8.1 & \bf 59.8 & - \\
    \midrule
        Bae et al. \cite{bae2021estimating}          & 16.03 & 8.47 & 58.8 & 1.26e-2 \\
        
        Bae et al. + EqLoss & \textbf{15.71} & \textbf{8.30} & \textbf{59.4} & \textbf{8.80e-3} \\
    \bottomrule
    \end{tabular}
    \vspace{-10pt}
\end{table}

\para{Equivariant regularization enables unlabeled data in the semi-supervised setting.}
Our equivariant regularization approach naturally extends to the semi-supervised learning setting, where the model can learn from additional unlabeled scene images. Apart from the usually labeled data stream, we train with an additional equivariant loss on the unlabeled data stream. In Table~\ref{tab:res_semi}, we show that, although Sup + EqLoss already brings decent improvements, Semi-sup + EqLoss yields more improvements. These results demonstrate the capability of our approach to leverage unlabeled data to achieve higher label efficiency, which is impossible with the standard augmentation approach.

\para{Unsupervised equivariant finetuning improves state-of-the-art depth and surface normal predictors.}
Another advantage (and important application) of our equivariant regularization approach over data augmentation is the ability to perform an unsupervised finetuning of state-of-the-art dense predictors to downstream datasets without any ground truth labels. Recall that the SoTA dense prediction networks do not preserve equivariance very well as in Figure~\ref{fig:sota_bad}. In this part, we demonstrate improvements in the equivariance and accuracy of the state-of-the-art MiDaS-v2.1 (CNN-based model), v3.0 (DPT-Large, Dense Prediction Transformer) depth predictors \cite{ranftl2020towards,ranftl2021vision} in Table~\ref{tab:res_nyu_depth}, and the uncertainty-guided normal predictor \cite{bae2021estimating} on the NYU-v2 dataset \cite{silberman2012indoor} in Table~\ref{tab:res_nyu_normal}. Note that none of the finetuned models have seen any NYU-v2 ground truth. Our approach consistently boosts their performance of them.

Quantitatively, we observe that not only the accuracy metrics of the depth and normal predictors are increased, but also the EqLoss column in Table~\ref{tab:res_nyu_depth} and \ref{tab:res_nyu_normal}, which measures the equivariant loss on validation images, is reduced by our approach in both cases. Reducing the EqLoss means that the expected error magnitude of prediction inconsistency coming from different crops of the same image is reduced. This suggests improvements of the equivariance of these predictors. A more thorough and direct evaluation of the finetuned predictors is in the supplementary material.

Qualitatively, our equivariant finetuning approach significantly alleviates the non-equivariant issue of state-of-the-art models. Figure~\ref{fig:vis_sota} visualizes the predictions before and after finetuning on NYU-v2. We can clearly see that the inconsistency between predictions of two crops is lessened after finetuning.

\begin{figure*}[tb]
    \centering
    \includegraphics[width=\textwidth]{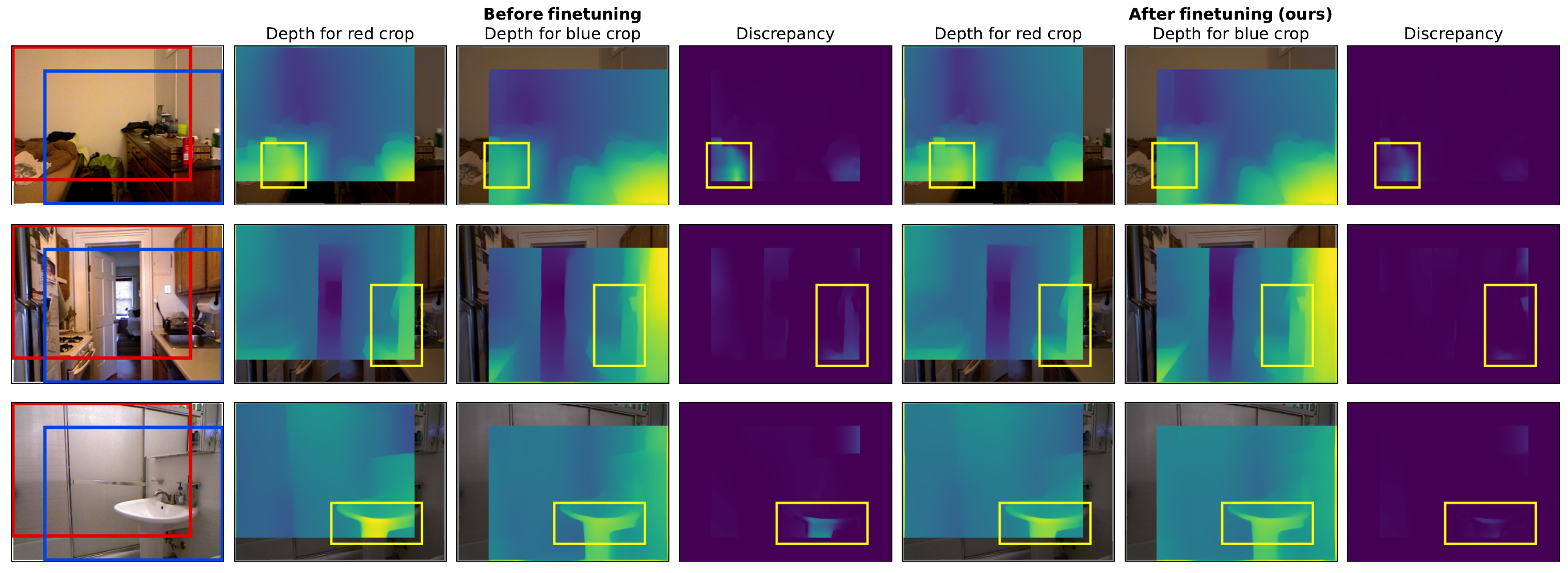}
    \includegraphics[width=\textwidth]{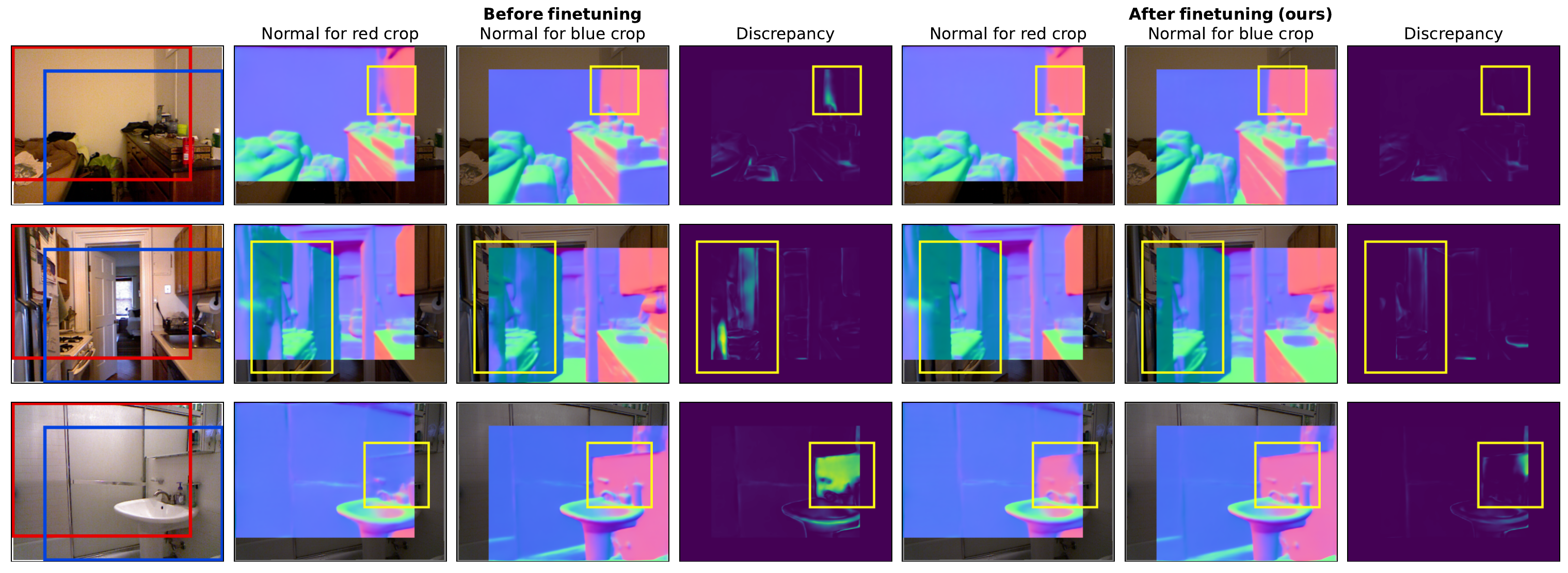}
    \vspace{-12pt}
    \caption{Visualization of predictions of a state-of-the-art depth network (MiDaS-v3.0 DPT Large \cite{ranftl2021vision}) and a surface normal network \cite{bae2021estimating} on two crops (red and blue) of the same image, {\em before and after our unsupervised equivariant finetuning}. The yellow boxes highlight the regions where the pre-trained models struggle with. The discrepancies in those regions are suppressed considerably after finetuning.}
    \label{fig:vis_sota}
    \vspace{-12pt}
\end{figure*}

\subsection{Ablation Study}

We ablate hyper-parameters in the supervised Taskonomy Depth setting, and provide additional comparisons.

\begin{table}[tb]
    \small
    \centering
    \caption{Ablation study on the equivariant loss coefficient.}
    \label{tab:abl_coef}
    \begin{tabular}{lcccc}
    \toprule
        Coefficient & 1e-5 & 3e-5 & \bf 1e-4 & 3e-4 \\
    \midrule
        $\delta\!\!>\!\!1.25$ (\%) & 25.5 & 25.2 & \bf 25.0 & 25.8 \\
    \bottomrule
    \end{tabular}
    \vspace{3mm}
    \caption{Ablation study on which layer to apply equivariant loss.}
    \label{tab:abl_layer}
    \begin{tabular}{lcccccc}
    \toprule
        Layer & L & {\bf L-1} & up0 & up1 & up2 & up3 \\
    \midrule
        Dimension                  & 1    & 16       & 16       & 32   & 64   & 128 \\
        $\delta\!\!>\!\!1.25$ (\%) & 25.7 & \bf 25.0 & \bf 25.0 & 25.2 & 25.1 & 26.1 \\
    \bottomrule
    \end{tabular}
    \vspace{-5pt}
\end{table}

\para{Number of crops (Figure~\ref{fig:abl_num}).}
The optimal number of crops per image $K$ (appears in Eq.~\ref{eq:equi_loss}) for our approach is around 3. We choose 3 in our experiments. Note that in the figure, a single stddev is estimated for all $K$ as the error bars. We also control each run to take roughly the same wall-clock time, which means K=3 yields the best trade-off between extra computing and performance under the fixed computation time budget. The depth error of our approach is almost always below that of the data augmentation alone baseline, indicating the higher efficiency of our approach.

\begin{figure}[tb]
    \centering
    \includegraphics[width=0.7\linewidth]{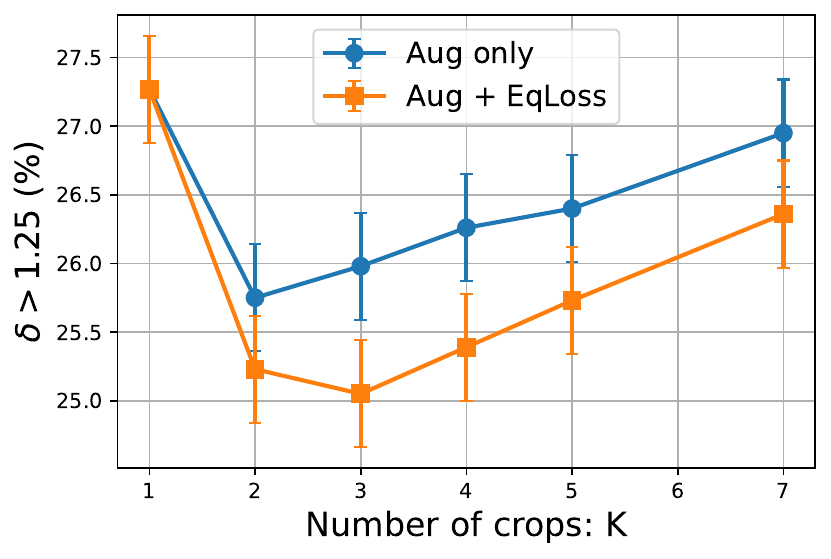}
    \caption{Study on the number of random crops per image.}
    \label{fig:abl_num}
    \vspace{-15pt}
\end{figure}

\para{Equivariant loss coefficient and location.}
In Table~\ref{tab:abl_coef},
3e-5 or 1e-4 performs well; the latter is slightly better. Table~\ref{tab:abl_layer} studies where to put the equivariant loss. L means applying the loss on the final output, L-1 means the penultimate Conv layer (which we use in experiments), up0-3 means the progressively earlier upsampling block of the U-Net. Applying the loss around L-1 seems to be working well, while going deeper into earlier layers yields worse results. This could be due to the lower resolution of early-stage feature maps.

\begin{table}[t]
\small
\centering
\caption{Inference-time equivariant averaging yields only small depth error reduction on NYU-v2, compared to the reduction from our \emph{training-time} equivariant finetuning.}
\label{tab:inference}
\setlength{\tabcolsep}{2.5pt}
\begin{tabular}{@{}lcc}
    \toprule
    $\delta\!\!>\!\!1.25$, AbsRel(\%)  & No averaging  & 3 crops averaging \\
    \midrule
    MiDaS-v2.1 pre-trained        & 8.71, 9.68   & 8.63 (-0.08), 9.61 (-0.07)  \\
    MiDaS-v2.1 + EqLoss  & 7.82, 8.92   & 7.78 (-0.04), 8.86 (-0.06) \\
    \bottomrule
\end{tabular}
\vspace{-5mm}
\end{table}

\para{Inference-time equivariant averaging (Table~\ref{tab:inference}).} We tested both the MiDaS pre-trained network and our finetuned network on NYU-v2. In both cases, inference-time averaging offers a small reduction of depth prediction error (smaller than our equivariant finetuning), which suggests the non-equivariant issue cannot be simply addressed by it.
Note that inference-time averaging increases latency--small improvement at the cost of the multiplied running time. The benefit of our approach is that the workload of averaging is offloaded to training, so that the inference procedure is unchanged and efficient (1 forward pass).

\para{Comparison to contrastive learning.}
In Table~\ref{tab:res_pretrain} of the supplementary material, we {initialize} from DenseCL \cite{wang2021dense} or PixelPro \cite{xie2021propagate} pre-trained ResNet; and in Table~\ref{tab:densecl}, we use DenseCL loss as {regularization during training}. In summary, \ref{tab:res_pretrain} suggests that although DenseCL indeed provides superior performance than random/supervised initialization, it does not completely resolve the equivariance issue--and our method can further improve upon DenseCL initialization; \ref{tab:densecl} suggests that our method (K=3) is stronger than pairwise DenseCL regularization (K=2) during training both depth and normal tasks.

\vspace{-5pt}
\section{Conclusion}

This paper reveals a salient problem in state-of-the-art depth and normal predictors -- that they are not equivariant to cropping, and proposes an equivariant regularization approach to address it. We demonstrate the usefulness of our approach in supervised, semi-supervised and unsupervised settings. We substantially improve equivariance and accuracy of state-of-the-art pre-trained models on NYU-v2 test set without using ground-truth labels. We hope future work can explore the powerful idea of equivariance in other dense prediction tasks and with transformations beyond cropping.

\noindent{\small
{\textbf{Acknowledgement.} This work was supported in part by NSF Grant 2106825, NIFA Award 2020-67021-32799, the Jump ARCHES endowment, the NCSA Fellows program, the Illinois-Insper Partnership, and the Amazon Research Award. This work used NVIDIA GPUs at NCSA Delta through allocations CIS220014 and CIS230012 from the ACCESS program.
Special thanks to Aditya Prakash for helping with the poster presentation.
}

{\small
\bibliographystyle{ieee_fullname}
\bibliography{main}
}

\def\parasmall#1{\smallskip\noindent\textbf{#1\,}}
\def\para#1{\medskip\noindent\textbf{#1\,}}

\def\gT{{\mathcal{T}}}

\appendix
\counterwithin{table}{section}
\counterwithin{figure}{section}

\section{Quantitative study of equivariant error}
In Figure~\ref{fig:sota_bad}, we qualitatively show that the state-of-the-art depth predictor, MiDaS-v3.0 DPT-Large \cite{ranftl2021vision}, possess insufficient equivariance to cropping transform. Here, we illustrate the same problem in a quantitative manner. From the input image in Figure~\ref{fig:sota_bad}, we generate 5,000 random pairs of crops with scale variation 0.85-1 and aspect ratio variation 3/4-4/3. Note the scale variation is deliberately chosen not to be drastic. We resize them and pass all of them to the pre-trained network to get 5,000 depth map predictions. After that, we compute the AbsRel (absolute relative error of depth, averaged over pixels) between the overlapped region of the pairs of predictions (using one as the target), and call this number $\mathrm{eqerr_{depth}}(f,t_1,t_2)$.  See the following equation. Here, $f$ is the depth predictor, $t_1,t_2$ represent a pair of randomly sampled crop transforms.
\begin{equation*}
    \mathrm{eqerr_{depth}}(f, t_1, t_2) = 
    \mathrm{AbsRel}( t_1^{-1} \circ f \circ t_1(x), t_2^{-1} \circ f \circ t_2(x) ) 
    .
\end{equation*}

This number essentially measures the degree of variation caused by random cropping. A perfectly equivariant predictor will have $\mathrm{eqerr_{depth}}=0$ for any crop transforms $t_1,t_2$. In Figure~\ref{fig:boxp_depth}, we draw the distribution of the 5,000 $\mathrm{eqerr_{depth}}$'s in a box plot, and compare to the AbsRel between the prediction and the ground truth (red line), for both the model before and after our equivariant fine-tuning.

From the left part of Figure~\ref{fig:boxp_depth}, we observe that the variation caused by random cropping (the box plot) is very large compared to the AbsRel to ground truth. The mean of variation is almost 6\%. The largest error can go beyond 10\%, whereas the error against ground truth is just 13.6\%. The right part shows the same quantities after our equivariant fine-tuning. We observe that the the variation caused by cropping is much smaller now, while the accuracy with respect to ground truth also improves.

\begin{figure}[t]
    \centering
    \includegraphics[width=\linewidth]{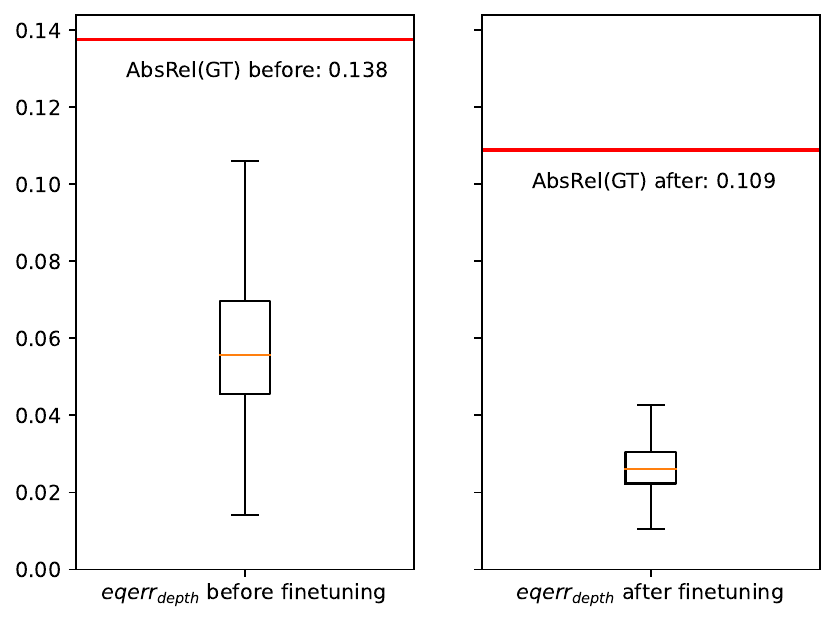}
    \caption{Box plots of variations caused by random cropping, before and after our unsupervised equivariant finetuning, for the example picture in Figure~\ref{fig:sota_bad} of the main paper. Red lines show the error against ground truth. The model is pre-trained MiDaS-v3.0 DPT-Large \cite{ranftl2021vision}. Equivariant fine-tuning shrinks the variation by random cropping considerably.}
    \label{fig:boxp_depth}
\end{figure}

\section{Additional results}

\paragraph{Using ImageNet supervised and dense contrastive learning pre-trained ResNets as initialization.}

The main paper Table~\ref{tab:res_task} shows supervised Taskonomy \cite{zamir2018taskonomy} results with the UNet architecture defined in \cite{zamir2020robust}. Here, we consider ResNet-50 \cite{he2016deep} based encoder-decoder architecture defined in \cite{zamir2018taskonomy} for depth prediction. This architecture uses the ResNet-50 backbone as encoder to obtain a 2048x8x8 feature map for 3x256x256 RGB input, then uses a 10-layer convolutional decoder (with transposed convolutions in last 5 layers) to decode a 256x256 full-sized output. We put our equivariance loss (EqLoss) with K=3 on the second to last layer. 

With ResNet as encoder, we have the possibility to initialize our training with pre-trained models. We examine ImageNet supervised classification pretrained from torchvision \cite{paszke2019pytorch} and PixelPro \cite{xie2021propagate} ImageNet-pretrained model. The reason to study pixel-wise dense contrastive learning pre-trained models such as DenseCL \cite{wang2021dense} and PixelPro (PixPro) \cite{xie2021propagate}, is that, (1) they are pre-trained with dense correspondence between two random crops -- the idea is similar to ours, therefore may achieve higher equivariance to cropping; (2) better suited for downstream dense prediction tasks than image-level pretrained models. We are curious if initializing from these feature stacks would alleviate the equivariance problem in depth predictors.

Table~\ref{tab:res_pretrain} lists the results. We can make several observations: 
(1) Initializing from PixelPro is better than from ImageNet sup., which is in turn better than random init. 
(2) PixelPro initialization does not completely resolve the non-equivariance issue, as the EqLoss of the final depth predictor is still high, which means there is still inconsistency between different crops of the same images. 
(3) Regardless of initialization, adding our EqLoss technique improves the final accuracy and equivariance. The improvement is larger for random init than others. The equivariance for PixelPro is improved as well, measured by lower validation EqLoss.
(4) Our method generalizes to ResNet architecture, in addition to the UNet (Tables~\ref{tab:res_task},\ref{tab:res_semi}) and Dense Prediction Transformer (Table~\ref{tab:res_nyu_depth}, \cite{ranftl2021vision}) architectures in the main paper.

\begin{table}[t]
    \small
    \centering
    \caption{ResNet-50 results on Taskonomy Depth-ZBuffer \cite{zamir2018taskonomy} with random, ImageNet sup., and PixelPro \cite{xie2021propagate} initializations. `+EqLoss' rows are adding our equivariant loss. $\delta\!\!>\!\!1.25$ and AbsRel are validation error metrics (lower the more accurate). `EqLoss' column is the validation equivariant loss (lower the more equivariant). Other experiment settings are the same as Table~\ref{tab:res_task} in the main paper.}
    \label{tab:res_pretrain}
    \setlength{\tabcolsep}{3pt}
    \begin{tabular}{@{}lccccc@{}}
    \toprule
        Pretrain & +EqLoss & $\delta\!\!>\!\!1.25$ (\%)$\downarrow$ & AbsRel (\%)$\downarrow$ &
        EqLoss$\downarrow$ &
        \\
    \midrule
        Random init. &            & 31.6     & 21.0     & 0.485  \\
        Random init. & \checkmark & 30.3 & 20.3 & 0.349  \\
    \midrule
        ImageNet sup. &            & 24.7     & 17.6     & 0.403  \\
        ImageNet sup. & \checkmark & 24.6 & 17.5 & 0.358  \\
    \midrule
        PixelPro \cite{xie2021propagate} & & 22.8     & 17.0     & 0.514  \\
        PixelPro & \checkmark              & 22.7     & 16.7     & 0.463 \\
    \bottomrule
    \end{tabular}
\end{table}

\paragraph{Using dense CL loss during depth/normal network training.}

\begin{table}[t]
\small
\centering
\caption{Compare the K=2 pixel-wise dense contrastive loss variant and the K=3 variant in our main results.}
\setlength{\tabcolsep}{7pt}
\begin{tabular}{@{}lcc@{}}
    \toprule
    & Dense CL (K=2)  & Our EqLoss (K=3) \\
    \midrule
    Depth: $\delta\!\!>\!\!1.25$ (\%)$\downarrow$ & 25.3  & \textbf{25.0} \\
    Normal: Ang error $^\circ$$\downarrow$        & 6.53  & \textbf{6.47} \\
    \bottomrule
\end{tabular}
\label{tab:densecl}
\end{table}

When K=2, our equivariant loss reduces to a type of pixel-wise dense contrastive loss, while our final results use K=3. Pixel-wise dense contrastive methods such as DenseCL \cite{wang2021dense} and PixelPro \cite{xie2021propagate} appear in self-supervised learning literature. They are relevant to our paper because the idea is also to learn equivariant rather than invariant representations in contrastive learning. Pixel-level contrastive learning have shown to improve upon image-level contrastive learning \cite{chen2020simple,he2020momentum} when transferring to detection and segmentation downstream tasks. However, they have not been applied to state-of-the-art depth and normal predictors to the best of our knowledge. Figure~\ref{fig:abl_num} includes this comparison of K=2 and K=3 for depth prediction. In Table~\ref{tab:densecl}, we supplement that result with the surface normal result. In both tasks, the K=3 variant is better than the pixel-wise dense contrastive loss variant, under the same wall-clock time budget. Regardless of the variants, our main point is that equivariance is missing from current depth and normal predictors and the equivariant regularization technique improves the performance of them by increasing equivariance.

\paragraph{Comparison of label and feature space equivariant regularization.}

We show additional surface normal prediction result for the comparison of loss layer in Table~\ref{tab:labelspace} to supplement Table~\ref{tab:abl_layer}. We find applying our EqLoss on feature space is also better than label space for surface normal.

\begin{table}[t]
\small
\centering
\caption{Compare imposing equivariant loss in label space and feature space in supervised Taskonomy \cite{zamir2018taskonomy} settings.}
\setlength{\tabcolsep}{8pt}
\begin{tabular}{@{}lcc@{}}
    \toprule
    & L: Label Space  & L-1: Feature Space \\
    \midrule
    Depth: $\delta\!\!>\!\!1.25$ (\%)$\downarrow$ & 25.7  & \textbf{25.0} \\
    Normal: Ang error $^\circ$$\downarrow$        & 6.54  & \textbf{6.47} \\
    \bottomrule
\end{tabular}
\label{tab:labelspace}
\end{table}

\section{Additional details}

\paragraph{Cosine weighting window.}
In Section 5.1, we mention the use of a weighting window with smooth edges when computing the equivariant average to suppress the boundary artifacts. The motivation is that the boundary predictions (of depths, for example) may not be accurate because the input may not contain enough context for those pixels. It is beneficial to down-weight them in averaging. Figure~\ref{fig:coswin} shows the illustration of the actual weighting window used in our experiments. The smooth edges are generated from a smooth-changing cosine function. The average operation in Eq.~\ref{eq:equi_loss} will become a weighted average.

\begin{figure}[t]
    \centering
    \includegraphics[width=0.5\linewidth]{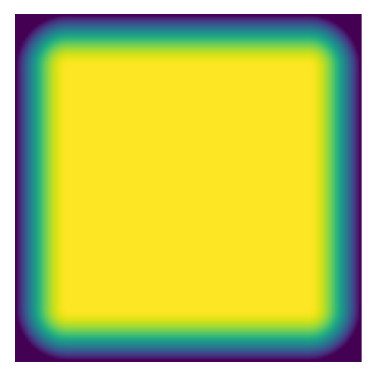}
    \caption{Cosine window weighting map when computing the output average by Eq.~\ref{eq:equi_loss} The brighter pixels mean weights close to 1, and the darker pixels mean weights close to 0. The edge transition follows a cosine function. The output map of each crop is weighted by this map. The boundary outputs will contribute less to the average to avoid artifacts.}
    \label{fig:coswin}
\end{figure}

\paragraph{Linear predictor and stop gradient.} Inspired by contrastive learning with predictor \cite{chen2021exploring,grill2020bootstrap}, we compared equivariant loss with or without a predictor function between the individual crop outputs and the average output. The intuition is that the predictor and stop gradient technique prevents the network from learning a collapsed constant representation. We found that training with a predictor is usually more stable and better-performing. In one experiment, the validation L1 loss improved from 5.61e-2 to 5.49e-2. Our predictor is a linear layer without bias terms, initialized to be the identity function, predicting from each crop output to the average output. The stop gradient is on the average output.

\paragraph{Applying equivariant loss to more than one layer.}
Table~\ref{tab:abl_layer} of the main paper studies the location of our equivariant loss. It is natural to consider applying the loss on more than one layer. We attempted applying it on both L-1 and up1. However, there seems to be no additional gain from this: the $\delta>1.25$ becomes 25.3\%. While we believe there might be more potential in general, we feel applying to multiple layers requires more effort on hyper-parameter tuning and causes unnecessary complexity. We thus stick to applying on only one layer for simplicity.

\paragraph{Special considerations for depth predictors.}
The pre-trained model from \cite{ranftl2020towards,ranftl2021vision} are trained with a combination of loss functions in the disparity space (inverse depth), and the prediction only satisfies $p \approx \alpha + \beta \frac1d$ where $p$ is the predicted disparity and $d$ is the actual depth.
Following their practice, we train with the L1 loss on the disparities (inverse depths) instead of depths.
Due to the same reason, before computing the evaluation metrics, we also follow their practice to use least square regression, i.e., compute the best $\alpha$ and $\beta$, to align the values of predicted disparity map to the ground truth disparity map.

\section{Is it just depth or normal?}

Our paper focuses on state-of-the-art depth and normal prediction models, and finds them not very equivariant to crop transform. Does the problem only exist for depth and normal predictors? We believe the problem is actually quite prevalent and was previously under-explored. In many image-to-image translation tasks where equivariance is desired, the network is not explicitly trained to be equivariant --It does not have strong preference that the output of cropping should not change.

Here, we use the CycleGAN horse-to-zebra translation \cite{zhu2017unpaired} as yet another failure example of equivariance in dense prediction models. Figure~\ref{fig:cyclegan} shows that the resulting stripes on the zebra are sensitive to the crop locations, while ideally the translated image should be a deterministic mapping of the input image contents. Similar issue was observed in the Fig.~7 of \cite{zhang2019making} with the building windows as the example. This problem is even more salient if the method is used to translate a video, as in the official gif example of CycleGAN\footnote{https://github.com/junyanz/pytorch-CycleGAN-and-pix2pix}, where we can visually see the unstable predictions of stripes. Therefore, we believe our approach is broadly relevant to the field. Our equivariant regularization loss can be employed as an additional loss during training to promote equivariance. 

\begin{figure*}[h]
    \centering
    \includegraphics[width=\linewidth]{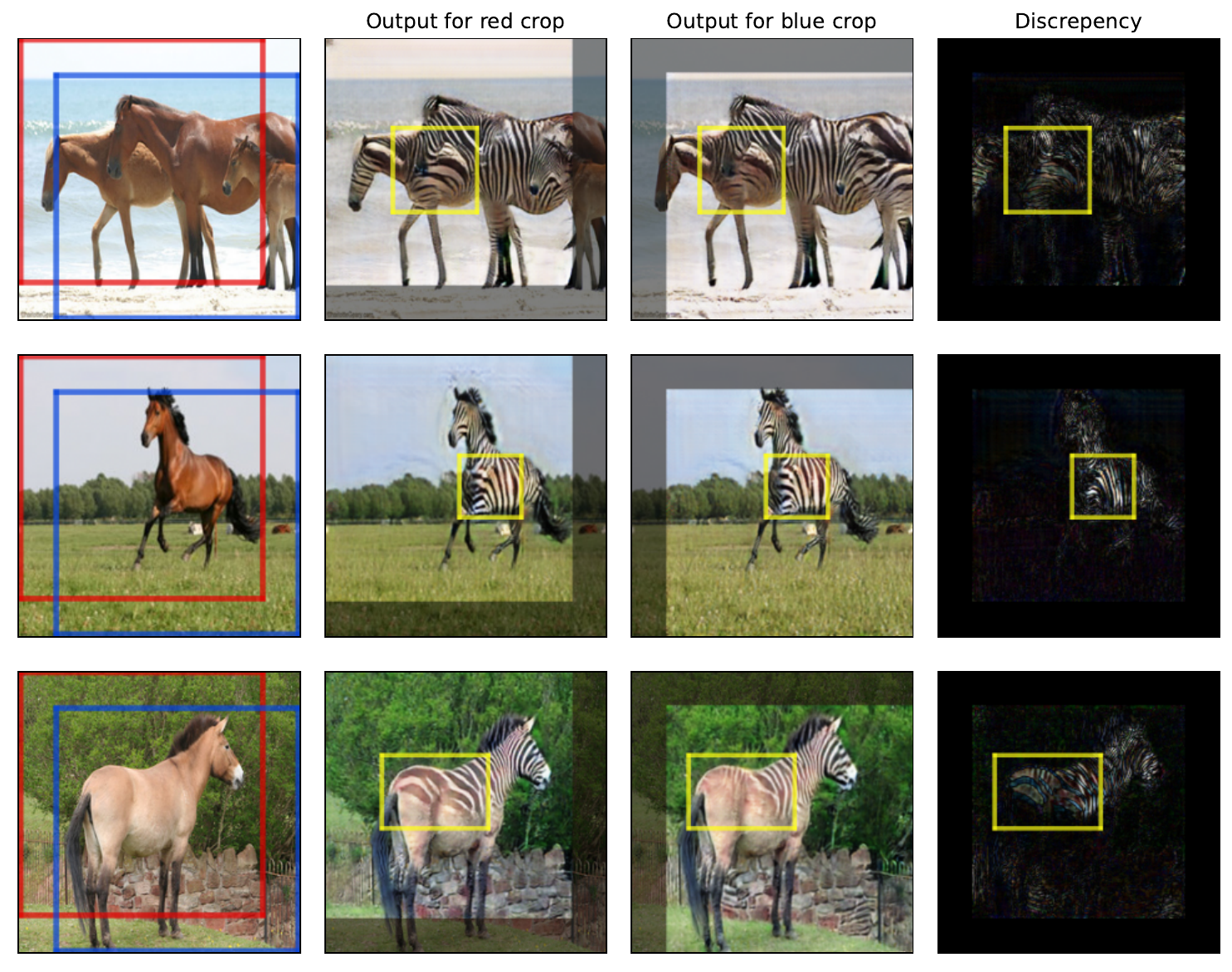}
    \caption{CycleGAN fails to be equivariant in the horse-to-zebra examples, in addition to the depth and surface normal cases of the main paper. We believe the issue of non-equivariance is quite common in image dense prediction models and has been overlooked by prior work (besides in semantic segmentation). Our equivariant regularization approach has potential uses in these other domains.}
    \label{fig:cyclegan}
\end{figure*}

\begin{figure*}[tp]
    \centering
    \includegraphics[width=\linewidth]{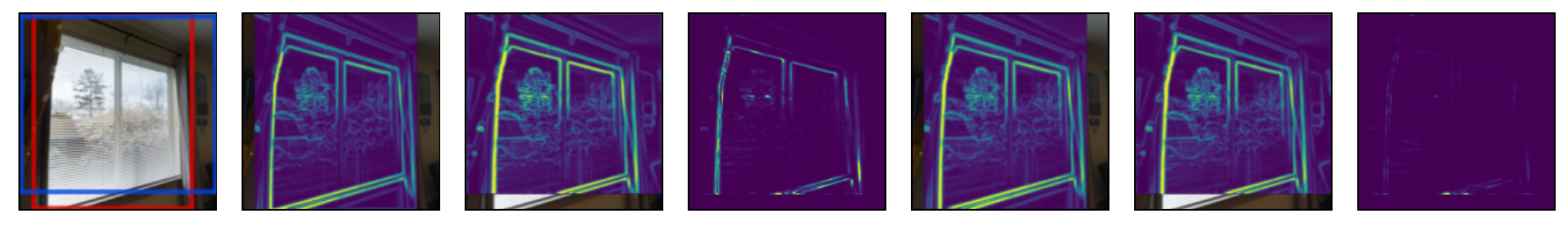}
    \includegraphics[width=\linewidth]{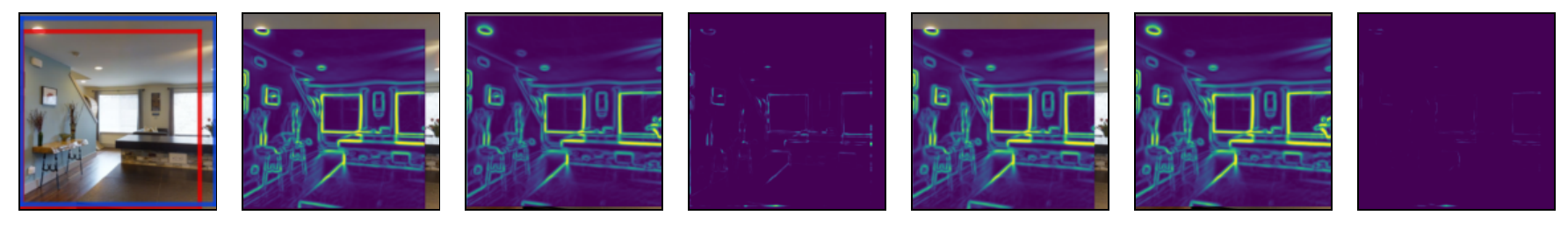}
    \caption{Visualization of Table~\ref{tab:res_task} edge detection results of Taskonomy validation images. From left to right, the columns are images (red, blue crops), predictions of the baseline model (without equivariant loss) and their discrepancy, predictions of our model (with equivariant loss) and their discrepancy.}
    \label{fig:more_vis}
\end{figure*}

\section{More visualization}

As a supplementary visualization to Figure~\ref{fig:vis_sota} (depth and normal) of the main paper, we provide edge detection results on Taskonomy \cite{zamir2018taskonomy} in Figure~\ref{fig:more_vis}. Our model with equivariant loss is more robust to cropping.

\end{document}